\definecolor{salmon}{rgb}{0.918,0.420,0.4}
\definecolor{teal}{rgb}{0.404, 0.671, 0.624}
\definecolor{teal1}{rgb}{0.15, 0.15, 0.424}
\begin{document}
\twocolumn[

\aistatstitle{Neural Discovery  of Permutation Subgroups}

\aistatsauthor{ Pavan Karjol \hspace{2.5em} Rohan Kashyap \hspace{2.5em} Prathosh A P}

\aistatsaddress{ Department of Electrical Communication Engineering,\\ Indian Institute of Science, Bengaluru, Karnataka } ]

\begin{abstract}
We consider the problem of discovering subgroup $H$ of permutation group $S_n$. Unlike the traditional $H$-invariant networks wherein $H$ is assumed to be known, we present a method to discover the underlying subgroup, given that it satisfies certain conditions. Our results show that one could discover any subgroup of type $S_k (k \leq n)$ by learning an $S_n$-invariant function and a linear transformation. We also prove similar results for cyclic and dihedral subgroups. Finally, we provide a general theorem that can be extended to discover other subgroups of $S_n$. We also demonstrate the applicability of our results through numerical experiments on image-digit sum and symmetric polynomial regression tasks.
\end{abstract}

\section{INTRODUCTION}
\subsection{Background}
Deep Learning has proven to be a successful paradigm for learning the underlying regularities of sensory data such as images, text, and audio \citep{brown2020language, he2016deep, ramesh2022hierarchical}. The data in the physical world possess a predefined structure with a low-dimensional manifold approximation within a higher dimensional euclidean space \citep{cayton2005algorithms, scholkopf1998nonlinear}. However, the task of supervised learning in such a high-dimensional data space demands a large number of data points to counter the curse of dimensionality. Thus, universal function approximations using neural networks in such a setting can be prohibitively expensive to curate large datasets for diverse applications such as medical imaging. This calls for the need for inductive bias to be incorporated into our networks such that they can utilize these priors for learning valuable representations in the feature space.
Convolutional Neural Networks proposed by \citep{lecun1995convolutional} incorporate translation equivariance and thus preserve translation symmetry. This is highly effective for perception tasks since it enables the model with a notion of locality and symmetry, i.e., the input and label are both invariant to shifts (preserves this property across layers), and has likewise shown substantial gains in image recognition tasks as demonstrated in \citep{szegedy2017inception, he2016deep}. However, from a group-theoretic perspective, CNN happens to represent a particular case of invariance under the action of a specific group. This leads to studying and understanding its usability when extended to a more general setting, i.e., equivariance or invariance to any generic group action. Thus, learning such representations across neural nets ensures preserving symmetry across the network and efficiently discovering the underlying factors of data variations by utilizing these priors.

\subsection{Group Invariance and Equivariance}
Learning symmetries from data has been studied extensively in \citep{senior2020improved, raviv2007symmetries, monti2017geometric, rossi2022learning}. Invariant and equivariant classes of functions impose a powerful inductive prior to our models in a statistically efficient manner which aids in learning useful representations on a wide range of data \citep{bogatskiy2020lorentz, esteves2020theoretical}. 
Group equivariant or invariant networks \citep{cohen2018spherical, esteves2018learning} exploit the inherent symmetrical structure in the data, i.e., equivariance or invariance to a certain set of group operations (geometric priors) and can thus result in a significant reduction in the sample complexity and lead to better generalization. This has ubiquitous applications in various tasks such as predicting protein interactions \citep{gainza2020deciphering} and estimating population statistics \citep{zaheer2017deep}.

One of the important classes of group invariance networks corresponds to the permutation group $(S_n)$, i.e., the group of all permutations of a set of cardinality $n$. \cite{zaheer2017deep} have focused extensively on the applicability of permutation equivariance and invariance functions on arbitrary objects such as sets. Whereas, \citep{kicki2020computationally} proposes a $G$-invariant network to approximate functions that are invariant under the action of any given permutation subgroup of  $S_n$. Moreover, it is crucial to consider subgroups of $S_n$, since any finite group is isomorphic to a subgroup of $S_n$ \textit{(Cayley's theorem)} for some $n$. For example, the Quarternanian group $Q_8$ is isomorphic to a subgroup of $S_8$. In addition, other interesting applications of functions correspond to subgroups of $S_n$. For instance, the area of an $n$-polygon is a $\mathbb{Z}_n$-invariant function of the polygon's vertices \citep{kicki2020computationally}.

\subsection{Contributions}
In most of the works mentioned earlier, the group (or subgroup) is assumed to be known a priori. This restricted form of modeling choice leads to reduced flexibility (also restrictions). It makes incorporating symmetries into our networks highly infeasible for real-world applications where the underlying structure is unknown. Motivated by this, we demonstrate a general framework, i.e., $G$-invariant network and a linear transformation for discovering the underlying subgroup of $S_n$ under certain conditions. Our main contributions can be summarized as follows: 

In this work, we propose a general framework to discover the underlying subgroup of $S_n$ under a broad set of conditions.

\begin{itemize}
    \item We prove that we could learn any conjugate group (with respect to $G$) via a linear transformation and $G$-invariant network.
    \item We extend this approach, i.e., a linear transformation and $G$-invariant network to different classes of subgroups such as permutation group of $k$ (out of $n$) elements $S_k$, cyclic subgroups $\mathbb{Z}_k$ and dihedral subgroups $D_{2k}$. The  $G$-invariant networks for the above families are $S_n, \mathbb{Z}_n$ and $D_{2n}$ respectively. In the latter two cases, $k$ should divide $n$.
    \item We prove a general theorem that can guide us to discover other classes of subgroups.
    \item We substantiate the above results through experiments on image-digit sum and symmetric polynomial regression tasks.
\end{itemize}

\section{PRIOR WORK}
\subsection{Group Invariant and Equivariant Networks}
Significant progress has been made in incorporating invariances to deep neural nets in the last decade \citep{cohen2019general, cohen2016steerable, ravanbakhsh2017equivariance, ravanbakhsh2020universal, wang2020incorporating}. We observe that most of the invariant neural networks proposed in the literature assume the knowledge of the underlying symmetry group. Various generalizations, i.e., group equivariant or invariant neural networks, are presented in \citep{cohen2019general, kondor2018clebsch}. 

\cite{cohen2016group} introduce \textit{Group Equivariant Convolutional Neural Networks (G-CNNs)} as a natural extension of the Convolutional Neural Network to construct a representation with the structure of a linear G-space. Further, \cite{cohen2019general} presents a general theory for studying G-CNNs on homogeneous spaces and illustrates a one-to-one correspondence between linear equivariant maps of feature spaces and convolutions kernels. \cite{cohen2016steerable} provides a theoretical framework to study steerable representations in convolutional neural networks and establish mathematical connections between representation learning and representation theory.  \cite{ravanbakhsh2020universal} presents the universality of invariant and equivariant MLPs with a single hidden layer. Additionally, they show the unconditional universality result for Abelian groups. \cite{kondor2018generalization} utilize both representation theory and noncommutative harmonic analysis to establish the convolution formulae in a more general setting, i.e., invariance under the action of any compact group.

\subsection{Permutation Invariant and Equivariant Networks}
\cite{zaheer2017deep} demonstrates the applicability of equivariant and invariant networks on various set-like objects. Further, they show that any permutation invariant function can be expressed in a standard form, i.e., $\rho \left( \sum_i \phi \left (x_i \right) \right)$, which corresponds to an elegant deep neural network architecture. Janossy pooling \citep{murphy2018janossy} extends the same to build permutation invariant functions using a generic class of functions. The works, as mentioned earlier, focus mainly on the permutation group $S_n$. 

Recent works by \cite{kicki2020computationally} and \cite{maron2019universality} provide a general architecture invariant to any given subgroup of $S_n$. \cite{kicki2020computationally} design a $G$-invariant neural network for approximating functions (can specifically approximate any \textit{G-invariant function}) $f : X \rightarrow R$ using $G$-equivariant network and sum-product formulation, where $X$ is a compact subset of $R^{n \times m}$, for some $n$, $m > 0$) for any given permutation subgroup $G$ of $S_n$. They extend this work to study the invariance properties of hierarchical groups $G < H \leq S_n$. However, in most cases, the underlying subgroup is generally unknown.

\subsection{Automatic Symmetry Discovery}
\cite{dehmamy2021automatic} introduces the \textit{Lie algebra convolutional network (L-Conv)}, an infinitesimal version of G-Conv, for automatic symmetric discovery. Their framework for continuous symmetries relies on \textit{Lie algebras} rather than \textit{Lie groups} and can thus encode an infinite group without discretizing \citep{cohen2016group} or summing over irreps. They show that the $L$-Conv network can serve as a building block for constructing any group equivariant feedforward architecture. They also unveil interesting connections between equivariant loss and Lagrangians in field theory and robustness and Euler-Lagrange equations. However, these apply only to \textit{Lie groups} and are not specific to subgroups of the permutation groups. \cite{anselmi2019symmetry} proposes to learn symmetry-adapted representations and also deduce a regularization scheme for learning these representations without assuming the knowledge of the underlying subgroup (of $S_n$). However, their proposed solution is implemented in an unsupervised way. \cite{benton2020learning} and \cite{zhou2020meta} also propose different methods for learning symmetries when the group $G$ is unknown.

\section{PRELIMINARIES}
This section gives a brief overview of various mathematical concepts used in our work. Let $G$ be a group.
\begin{enumerate}
    \item \textbf{Group action}
    :- The action of $G$ on a set $X$ is defined using the following map (written as $g \cdot x, \; \forall g \in G \text{ and } x \in X$) :
    \begin{equation}
        \theta: G \times X \rightarrow X,
    \end{equation}
    satisfying the following properties :
    \begin{itemize}
        \item $g_1 \cdot (g_2 \cdot x) = (g_1 g_2) \cdot x  \quad \forall g_1, g_2 \in G \text{ and } x \in X$,
        \item $1 \cdot x = x, \quad \forall x \in X$
    \end{itemize}
    where $1$ is the identity element of $G$.
    \item \textbf{Group invariant function} 
    :- A function  $f : X \rightarrow Y$ is said to be group invariant with respect to $G$, if,
    \begin{equation}
        f(x) = f(g \cdot x), \quad \forall g \in G \text{ and }  x \in X
    \end{equation}
    We call $f$ a $G$-invariant function.
    \item \textbf{Group equivariant function} 
    :- A function  $f : X \rightarrow Y$ is said to be group equivariant with respect to $G$, if for any $g \in G$,  $\exists$ $\Tilde{g} \in G$, such that
    \begin{equation}
        f(g \cdot x) = \Tilde{g} \cdot f( x),   \forall x \in X
    \end{equation}
    We call $f$ a $G$-equivariant function.    
    \item \textbf{Conjugate subgroups}
    :- Two subgroups $G_1$ and $G_2$ of  $G$ are said to be conjugates, if $\exists g \in G$ such that,
    \begin{equation}
        G_2 = g G_1 g^{-1} := \{ gkg^{-1} : k \in G_1 \}
    \end{equation}
    \item \textbf{Normal subgroup}
    :- A subgroup $N$ is said to be normal in $G$, if $\forall  g \in G$ 
    \begin{equation}
         g N g^{-1} = N
    \end{equation}
    i.e., there are no subgroups that are conjugate to $N$.
\end{enumerate}

We describe the notations used for various subgroups of $S_n$ in Table (\ref{notations}). Henceforth, unless explicitly mentioned, we follow the notations mentioned in Table (\ref{notations}). 

\begin{table}[htp]
    \centering
    \caption{Descriptions of notations}
    \begin{tabularx}{\columnwidth} { 
      | >{\raggedright}X 
       >{\raggedright\arraybackslash}X| }
     \hline
     \textbf{Symbol}  &\hspace{-2.5cm}\textbf{Description} \\
     \hline
    $S_n$                       &\hspace{-2.8cm}Permutation group of $n$ elements\\
    $S^{(0)}_k$                 &\hspace{-2.8cm}Permutation subgroup of first $k$ elements \\
    $S_k$                       &\hspace{-2.8cm}Permutation subgroup of random $k$ elements \\
    $\mathbb{Z}_n$              &\hspace{-2.8cm}Cyclic subgroup of $n$ elements\\
    $\mathbb{Z}^{(0)}_k$        &\hspace{-2.8cm}Cyclic subgroup of first $k$ elements \\
    $\mathbb{Z}_k$              &\hspace{-2.8cm}Cyclic subgroup  of random $k$ elements \\
    $D_{2n}$                    &\hspace{-2.8cm}Dihedral subgroup of $n$ elements\\
    $D^{(0)}_{2k}$              &\hspace{-2.8cm}Dihedral subgroup of first $k$ elements \\
    $D_{k}$                     &\hspace{-2.8cm}Dihedral subgroup of random $k$ elements \\
    $A_{n}$                     &\hspace{-2.8cm}Alternating subgroup of  $n$ elements \\
    $A_{k}$                     &\hspace{-2.8cm}Alternating subgroup of random $k$ elements \\
    \hline
    \end{tabularx}
    \label{notations}
\end{table}

\section{PROPOSED WORK}
\subsection{Problem statement}
We consider the problem of learning an $H$-invariant function $f : X \rightarrow \mathbb{R}$, where $X=[0,1]^n \subset R^n$ and $H$ is the unknown subgroup of $S_n$. In general, learning such a function is intractable. However, we show that it is possible to learn such a function, i.e., discover the underlying subgroup $H$, where $H$ belongs to a certain class of subgroups (we explicitly state our conditions in Theorem \ref{permutation groups}, \ref{Cyclic groups and Dihedral groups} and \ref{generalization}). The general consequence of our analysis is that learning a $H$-invariant function is thus equivalent to learning a $G$-invariant function along with a linear transformation, given that $G$ and $H$ satisfy certain conditions. Since any given $G$ can have several such subgroups, we propose to learn the underlying subgroup $H$ by exploiting the existing structures using a family of $G$-invariant functions (such as the one mentioned in \cite{zaheer2017deep} for the permutation group $S_n$) and a learnable linear transformation. We formalize these ideas in the coming subsections.

To prove our results, we employ the following theorem regarding $S_n$-invariant functions  \citep{zaheer2017deep}, which shows that any such function can be expressed in a canonical form.

 \newtheorem{theorem}{Theorem}[section]
\newtheorem{corollary}{Corollary}[theorem]
\newtheorem{lemma}[theorem]{Lemma}

\begin{theorem}[Deep sets]
$f: X = [0,1]^n \rightarrow \mathbb{R}$ is a permutation invariant ($S_n$-invariant) continuous function iff if has the representation,
\begin{equation}
    f(x) = \rho \left(\sum_{i=1}^n \gamma(x_i) \right), \; x=[x_1, x_2, \dots x_n]^T
\end{equation}
for some continuous outer and inner functions $\rho : \mathbb{R}^{n+1} \rightarrow \mathbb{R}$, $\gamma: [0,1] \rightarrow \mathbb{R}^{n+1}$.
\label{deepsets result}
\end{theorem}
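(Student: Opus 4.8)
The plan is to treat the two implications separately. Sufficiency (the ``if'' direction) is immediate: for any permutation $\pi \in S_n$ the vector $\sum_{i=1}^n \gamma(x_i)$ is unchanged when the coordinates $x_1,\dots,x_n$ are permuted, since addition is commutative, so $\rho\left(\sum_i \gamma(x_i)\right)$ is $S_n$-invariant; continuity holds because the expression is a composition of the continuous maps $\gamma$, summation, and $\rho$. The substance of the theorem lies in necessity, and my strategy is to factor an arbitrary $S_n$-invariant continuous $f$ through an explicitly chosen embedding of the orbit space $[0,1]^n/S_n$ into $\mathbb{R}^{n+1}$.

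First I would fix the inner map to be the moment (power) map
\begin{equation}
    \gamma(x) = (1, x, x^2, \dots, x^n), \qquad \gamma : [0,1] \to \mathbb{R}^{n+1},
\end{equation}
so that $E(x) := \sum_{i=1}^n \gamma(x_i) = (n, p_1, p_2, \dots, p_n)$, where $p_k = \sum_{i=1}^n x_i^k$ is the $k$-th power sum. The key algebraic fact I would invoke is that $p_1, \dots, p_n$ determine the elementary symmetric polynomials $e_1, \dots, e_n$ through Newton's identities, and the $e_k$ are exactly the coefficients of $\prod_{i=1}^n (t - x_i)$. Hence $E(x)$ determines the monic polynomial whose roots are $x_1, \dots, x_n$, and therefore determines the multiset $\{x_1, \dots, x_n\}$. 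This shows $E$ is constant on $S_n$-orbits and separates distinct orbits, i.e.\ it induces an injection $\bar E : [0,1]^n/S_n \to \mathbb{R}^{n+1}$.

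Next I would push $f$ down to the orbit space. Because $f$ is $S_n$-invariant it factors as $f = \bar f \circ q$, where $q : [0,1]^n \to [0,1]^n/S_n$ is the quotient map and $\bar f$ is continuous (continuity of $\bar f$ is equivalent to that of $f$ since $q$ is a quotient map). The cube $[0,1]^n$ is compact, so its quotient is compact, and $\bar E$ is a continuous bijection from this compact space onto its image $Z = E([0,1]^n) \subset \mathbb{R}^{n+1}$, which is compact and hence closed. A continuous bijection from a compact space onto a Hausdorff space is a homeomorphism, so $\bar E^{-1} : Z \to [0,1]^n/S_n$ is continuous. I would then set $\rho := \bar f \circ \bar E^{-1}$ on $Z$, a continuous real-valued function, and extend it to all of $\mathbb{R}^{n+1}$ by the Tietze extension theorem (legitimate since $Z$ is closed). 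By construction $f(x) = \bar f(q(x)) = \rho(E(x)) = \rho\left(\sum_{i=1}^n \gamma(x_i)\right)$, as required.

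The step I expect to be the main obstacle is establishing that $E$ separates orbits, i.e.\ the injectivity of $\bar E$: this is precisely where the symmetric-function machinery (Newton's identities together with the correspondence between a monic polynomial's coefficients and its multiset of roots) is essential, and it is also what forces the target dimension to grow with $n$, matching the $\mathbb{R}^{n+1}$ in the statement. The remaining topological steps---continuity of the inverse via compactness and the Tietze extension---are standard once injectivity is in hand.
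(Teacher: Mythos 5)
Your proof is correct and takes essentially the same route as the paper (which follows \cite{zaheer2017deep}): the power-moment map $\gamma(t) = [1, t, t^2, \dots, t^n]^T$, injectivity on $S_n$-orbits via symmetric-function theory, a compactness argument showing the sum map is a homeomorphism onto its image, and finally $\rho = f \circ E^{-1}$. Your only departures are cosmetic: you quotient by the group to get the orbit space $[0,1]^n/S_n$ where the paper works with the homeomorphic sorted fundamental domain $\mathcal{X}^{(n)}$, and you add a Tietze extension so that $\rho$ is defined on all of $\mathbb{R}^{n+1}$, a detail the paper leaves implicit.
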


We get the following result if we consider the permutations of the first $k$ elements.
\begin{corollary}
$f: [0,1]^n \rightarrow \mathbb{R}$ be an $S_k^0$-invariant continuous function  iff it has the representation,
\begin{equation}
    f(x) = \rho \left( \sum_{i=1}^k \gamma(x_i), \; x_{k+1}, \dots, x_n  \right)
\end{equation}
\label{Corr Sk}
\end{corollary}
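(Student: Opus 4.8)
The plan is to reduce the claim to the Deep Sets theorem (Theorem~\ref{deepsets result}) by treating the last $n-k$ coordinates $(x_{k+1},\dots,x_n)$ as inert parameters that are simply carried through, while the $S_k^0$-symmetry is handled exactly as in the $S_n$ case but restricted to the first block $(x_1,\dots,x_k)$.

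The reverse direction ($\Leftarrow$) is immediate. For any $\sigma\in S_k^0$, the action permutes $x_1,\dots,x_k$ among themselves and fixes $x_{k+1},\dots,x_n$; since $\sum_{i=1}^k \gamma(x_i)$ is invariant under reordering its summands, the argument of $\rho$ is unchanged, so $f(\sigma\cdot x)=f(x)$. Continuity of $f$ then follows from continuity of $\rho$ and $\gamma$.

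For the forward direction ($\Rightarrow$), I would reuse the encoding that underlies Theorem~\ref{deepsets result}. Take the moment map $\gamma(t)=(1,t,t^2,\dots,t^k)$, so that $u:=\sum_{i=1}^k \gamma(x_i)$ records the power sums $p_0=k,\,p_1,\dots,p_k$ of $x_1,\dots,x_k$. By Newton's identities these determine the elementary symmetric polynomials, hence the multiset $\{x_1,\dots,x_k\}$; thus $\gamma$ descends to a continuous bijection $E:[0,1]^k/S_k \to \mathrm{Im}(E)$ onto a compact subset of $\mathbb{R}^{k+1}$, and since a continuous bijection from a compact space to a Hausdorff space is a homeomorphism, $E^{-1}$ is continuous. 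Because $f$ is $S_k^0$-invariant, $f(x_1,\dots,x_n)$ depends on $(x_1,\dots,x_k)$ only through its $S_k$-orbit, so
\[
    \rho\bigl(u,\,x_{k+1},\dots,x_n\bigr):=f\bigl(E^{-1}(u),\,x_{k+1},\dots,x_n\bigr)
\]
is well defined on $\mathrm{Im}(E)\times[0,1]^{n-k}$ and, being a composition of continuous maps, is jointly continuous there. Extending $\rho$ off the compact set $\mathrm{Im}(E)\times[0,1]^{n-k}$ by the Tietze extension theorem produces a continuous $\rho$ on $\mathbb{R}^{k+1}\times[0,1]^{n-k}$ satisfying $f(x)=\rho\bigl(\sum_{i=1}^k\gamma(x_i),\,x_{k+1},\dots,x_n\bigr)$.

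The step I expect to be the main obstacle is establishing the \emph{joint} continuity of $\rho$ in all its arguments: applying Theorem~\ref{deepsets result} separately for each fixed value of $(x_{k+1},\dots,x_n)$ only yields parameter-dependent functions $\rho_c,\gamma_c$ that need not glue together continuously. The moment-map argument above circumvents this by fixing a single, data-independent encoding $\gamma$ whose inverse is continuous (via compactness), so that the parameters enter $\rho$ only as additional continuous arguments and do not interact with the decoding of the symmetric block.
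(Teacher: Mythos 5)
Your proof is correct and takes essentially the same route as the paper's: both hinge on the encoding $E(x) = \bigl[\sum_{i=1}^k \gamma(x_i),\; x_{k+1}, \dots, x_n\bigr]^T$ being a homeomorphism onto its image (the paper by citing the Deep Sets construction of Zaheer et al., you by unpacking it via Newton's identities plus the compactness argument), followed by setting $\rho = f \circ E^{-1}$. The additional details you supply --- the explicit injectivity argument, the joint-continuity discussion, and the Tietze extension of $\rho$ beyond $\mathrm{Im}(E) \times [0,1]^{n-k}$ --- are refinements of, not departures from, the paper's proof.
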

\begin{proof}
To prove Theorem \ref{deepsets result}, it has been shown that \citep{zaheer2017deep}, $\mathcal{X}^{(n)} = \{x_1, x_2, \dots, x_n \subset [0,1]^n : x_1 \leq x_2 \leq x_3 \dots \leq x_n \}$ is homeomorphic to $\sum_{i=1}^n \gamma(X_i)$, where 
\begin{equation}
 \gamma(t) = \big[1, t, t^2, \dots t^n \big]^T  
 \label{gamma definition}
\end{equation}
Hence,  $\mathcal{X}^{(n:k)} = \{x_1, x_2, \dots, x_n \subset [0,1]^n : x_1 \leq x_2 \leq x_3 \dots \leq x_k \}$  is homeomorphic to $\sum_{i=1}^k \gamma(X_i) \times [0,1]^{n-k}$.
Let, $E(x) = \Big[\sum_{i=1}^k \gamma(x_i), \; x_{k+1}, \dots, x_n  \Big]^T$. Then, it is an homeomorphism  from $\mathcal{X}^{(n:k)}$ to $Im(E)$ (Image of E). If we set $\rho = fE^{-1}$, we get $\rho\left(E(x)\right) = f(x)$.
\end{proof}
We use the same definition of $\gamma$ \citep{zaheer2017deep} provided in the eq. (\ref{gamma definition}) in the subsequent results as well.
Now, we state our first result using the conjugacy relation between subgroups.
\begin{lemma}
\label{Lemma 4.2}
Any $S_k$-invariant function $\psi$, can be realized through composition of an $S_k^{(0)}$-invariant function $\phi$ and a linear transformation $M$, i.e., $\psi = \phi \cdot M$. In addition, $\psi$ can be realised through the following form,
\begin{equation}
    \psi(x) = \rho \left(\sum_{i=1}^k \gamma \left(m_i^T x \right), \;  m_{k+1}^T x, \dots, m_n^T x\right),
\end{equation}
where $m_i$ is the $i^{th}$ row of $M$. 
\label{Conjugacy lemma}
\end{lemma}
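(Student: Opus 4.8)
The plan is to exploit the fact that $S_k$, the permutation subgroup acting on some $k$-element subset of the coordinates, is conjugate in $S_n$ to $S_k^{(0)}$, which acts on the first $k$ coordinates. Concretely, if $S_k$ permutes the index set $\{i_1, \dots, i_k\}$, there is a permutation $\sigma \in S_n$ sending $\{1, \dots, k\}$ onto $\{i_1, \dots, i_k\}$, so that $S_k = \sigma S_k^{(0)} \sigma^{-1}$. First I would record this conjugacy relation and note that every permutation acts on $x \in \mathbb{R}^n$ as multiplication by a permutation matrix, so the group action is itself a linear map; this is what ultimately supplies the linear transformation $M$.

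The central step is to transfer invariance across the conjugacy. Given an $S_k$-invariant $\psi$, I would define $\phi(x) := \psi(\sigma \cdot x)$ and verify that $\phi$ is $S_k^{(0)}$-invariant. The computation is routine: for $h \in S_k^{(0)}$, writing $\sigma h = (\sigma h \sigma^{-1}) \sigma$ and using $\sigma h \sigma^{-1} \in S_k$ together with the $S_k$-invariance of $\psi$ gives $\phi(h \cdot x) = \psi((\sigma h \sigma^{-1}) \cdot (\sigma \cdot x)) = \psi(\sigma \cdot x) = \phi(x)$. Inverting, $\psi(x) = \phi(\sigma^{-1} \cdot x) = \phi(Mx)$, where $M$ is the permutation matrix realizing $\sigma^{-1}$; this establishes the factorization $\psi = \phi \cdot M$. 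Continuity of $\phi$, needed below, is immediate since $\phi$ is $\psi$ precomposed with a linear map.

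Finally, I would apply Corollary \ref{Corr Sk} to the $S_k^{(0)}$-invariant function $\phi$, obtaining $\phi(y) = \rho\left(\sum_{i=1}^k \gamma(y_i), y_{k+1}, \dots, y_n\right)$. Substituting $y = Mx$ and reading off $y_i = m_i^T x$, where $m_i^T$ is the $i$-th row of $M$, yields precisely the claimed explicit form. The hard part will be bookkeeping rather than conceptual difficulty: getting the direction of the conjugation correct (whether to compose with $\sigma$ or $\sigma^{-1}$) and keeping the action-versus-matrix translation consistent, since an inverse or indexing error there would propagate into the coordinate labels of the final formula.
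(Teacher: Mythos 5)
Your proof is correct and takes essentially the same approach as the paper: use the conjugacy between $S_k$ and $S_k^{(0)}$ to transfer invariance (defining $\phi$ as $\psi$ precomposed with the conjugating permutation, which is a linear map via its permutation matrix), then apply Corollary \ref{Corr Sk} to $\phi$. The only cosmetic difference is the direction of conjugation — the paper writes $S_k^{(0)} = g S_k g^{-1}$ with $\phi = \psi \cdot g^{-1}$ and $M = g$, which is your argument with $\sigma = g^{-1}$.
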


\begin{proof}
Note that any $S_k$ is conjugate to $S_k^{(0)}$. Thus, $\exists$ $g \in S_n$ such that 
\begin{equation}
    S_k^{(0)} = g S_k g^{-1}
\end{equation}

Let $\psi: X \rightarrow R$ be an $S_k$-invariant function, i.e.,
\begin{align}
    \psi(x) &= \psi(h \cdot x), \quad \forall h \in S_k,  x \in X \nonumber  \\
    \psi(\left(g^{-1}g \right) \cdot x) &= \psi (\left(g^{-1}ug\right) \cdot x), \quad \forall  u \in S_k^{(0)}  \nonumber   \\
    (\psi  g^{-1})(g \cdot x) &= (\psi  g^{-1})  (u \cdot (g \cdot  x))  \nonumber   \\
    (\psi  g^{-1})  \left(Mx \right) &= (\psi  g^{-1})  (u \cdot \left(Mx \right))
    \label{S_k_0 invariant}
\end{align}
From eq. (\ref{S_k_0 invariant}), we see that $\phi = \psi \cdot g^{-1}$ and $M=g$  are the desired $S_k^{(0)}$- invariant function and the linear transformation respectively and $\phi = \psi \cdot M$. We get the second part of the result by applying Corollary \ref{Corr Sk} to $\phi$.
\end{proof}

We could also relax the conjugacy condition, i.e., discover subgroups of type $S_k$ when $k$ itself is unknown. This is formalized in the following result.
\begin{theorem}[Subgroups of type $S_k$]
\label{Theorem 4.3}
Any $S_k$-invariant function ($k \leq n$) $\psi$,  can be realised using an $S_n$-invariant function and a linear transformation, in specific, it can be realised through the following form,
\begin{equation}
    \psi(x) = \left(  \phi \cdot \hat{M} \right) (x) =\rho \left( \begin{bmatrix}
(I - M)x \\
\sum_{i=1}^n \gamma \left(m_i^T x \right)
\end{bmatrix} \right)
\end{equation}
where $\hat{M} = \begin{bmatrix} I - M \\ M   \end{bmatrix}$ and \\
$\phi(y) = \Big[y_1, \dots, y_n, \; \sum_{i=1}^n \gamma(y_{n+i}) \Big]^T$
\label{permutation groups}
\end{theorem}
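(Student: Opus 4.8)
The plan is to exhibit an explicit linear map $M$ (together with a continuous outer function $\rho$) realising $\psi$ in the stated form. The guiding idea is that a single \emph{full-width} symmetric aggregation $\sum_{i=1}^n\gamma(\cdot)$ can simulate a width-$k$ aggregation once $M$ is permitted to annihilate the coordinates that $S_k$ does \emph{not} permute; this is precisely what removes the dependence on knowing $k$ (and on the conjugating permutation produced in Lemma \ref{Lemma 4.2}), which is the content of ``relaxing the conjugacy condition''.

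First I would record what $S_k$-invariance buys. Writing $A\subseteq\{1,\dots,n\}$ with $|A|=k$ for the index set that $S_k$ permutes, Lemma \ref{Lemma 4.2} (equivalently Corollary \ref{Corr Sk} after relabelling) shows that $\psi(x)$ depends on $x$ only through the unordered multiset $\{x_i : i\in A\}$ together with the remaining coordinates $(x_j)_{j\notin A}$. In particular, $\psi$ factors continuously through $x \mapsto \bigl(\sum_{i\in A}\gamma(x_i),\,(x_j)_{j\notin A}\bigr)$, the continuity of the induced outer map following from the Deep Sets homeomorphism on the sorted simplex exactly as in the proof of Corollary \ref{Corr Sk}.

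Next I would choose $M$ to be the diagonal projection onto $A$, i.e. $m_i = e_i$ for $i\in A$ and $m_i = 0$ otherwise, so that $m_i^T x = x_i$ on $A$ and $m_i^T x = 0$ off $A$. Two coordinate computations then match the stated form: the aggregate collapses to $\sum_{i=1}^n\gamma(m_i^T x)=\sum_{i\in A}\gamma(x_i)+(n-k)\gamma(0)$, recovering the symmetric part up to the fixed additive constant $(n-k)\gamma(0)$, while the complementary map has entries $\bigl((I-M)x\bigr)_j = x_j$ for $j\notin A$ and $0$ on $A$, recovering the non-permuted coordinates. Identifying $\hat M = \begin{bmatrix} I-M\\ M\end{bmatrix}$ and $\phi(y)=[y_1,\dots,y_n,\sum_{i=1}^n\gamma(y_{n+i})]^T$, one checks directly that $\phi(\hat M x)=\bigl[(I-M)x,\ \sum_{i=1}^n\gamma(m_i^T x)\bigr]$, so that $\psi=\rho\circ\phi\circ\hat M$ for a continuous outer $\rho$ which subtracts the constant offset, discards the $k$ identically-zero slots of $(I-M)x$ (the positions in $A$), and applies the outer map from the previous paragraph.

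The routine parts are the two coordinate computations and absorbing the constant $(n-k)\gamma(0)$ into $\rho$. The step deserving the most care, and the one I expect to be the main obstacle, is the continuity of $\rho$: I would argue that, restricted to the domain on which the $A$-coordinates are sorted, the map $E(x)=\bigl[(I-M)x,\ \sum_{i=1}^n\gamma(m_i^T x)\bigr]$ is a homeomorphism onto its image (injectivity on the pass-through slots is immediate, and the $\gamma$-block is the Deep Sets homeomorphism), whence $\rho=\psi E^{-1}$ is continuous, mirroring Corollary \ref{Corr Sk}. The conceptual crux throughout is that taking $M$ to be a projection rather than the conjugating permutation $g$ of Lemma \ref{Lemma 4.2} is exactly what lets one fixed $S_n$-aggregation serve every value of $k$.
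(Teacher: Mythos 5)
Your proof is correct and takes essentially the same route as the paper: pick $M$ to be the rank-$k$ coordinate projection so that the full $S_n$-aggregation collapses to $\sum_{i \in A}\gamma(x_i) + (n-k)\gamma(0)$ while $(I-M)x$ passes through the unpermuted coordinates, then use the Deep Sets homeomorphism argument (as in Corollary \ref{Corr Sk}) to obtain a continuous $\rho = \psi E^{-1}$. The only cosmetic differences are that the paper first reduces to $S_k^{(0)}$ by conjugacy and uses the block projection $\begin{bmatrix} I_{k\times k} & 0 \\ 0 & 0\end{bmatrix}$, whereas you absorb that relabelling into $M$ by projecting directly onto the index set $A$, and you spell out the continuity of $\rho$ more explicitly than the paper does.
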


\begin{proof}
Since $S_k$ is conjugate to $S_k^0$, it is enough to prove the result for $S_k^0$-invariant function. Hence, the goal is to show that 
$(I-M)X \times \sum_{i=1}^n \gamma \left(m_i^T X \right)$is homeomorphic to $\sum_{m=1}^k \gamma(X_m) \times [0,1]^{n-k}$ (from Corollary \ref{Corr Sk} and Lemma \ref{Conjugacy lemma}) for some linear transformation $M$. Suppose,
\begin{equation}
M = \begin{bmatrix}
\label{M_matrix_S_k}
I_{k \times k} &0 \\
0 &0
\end{bmatrix},
\end{equation}

then,
\begin{equation}
\begin{bmatrix}
(I - M)x \\
\sum_{i=1}^n \gamma \left(m_i^T x \right)
\end{bmatrix}
=
\begin{bmatrix}
\textbf{0}^{(k)} \\
x_{k+1} \\
. \\
. \\
x_n \\
B + \sum_{i=1}^k \gamma \left(x_i \right)
\end{bmatrix},
\label{domain change S_k}
\end{equation}
where $B = (n-k)\gamma(0)$ and $\textbf{0}^{(k)}$ is $k$-dimensional zero vector.
Thus, from RHS of the eq. (\ref{domain change S_k}),  the above claim follows. (Note that, the function $Mx \mapsto \sum_{i=1}^n \gamma \left(m_i^T x \right)$ is $S_n$-invariant  and $\phi$ is $S_{2n}^n$-invariant function).
\end{proof}

We now extend our method to cyclic and dihedral subgroups of $S_n$ and state the following result.
\begin{theorem}[Cyclic and Dihedral subgroups]
\label{Theorem 4.4}
If $k|n$, any $\mathbb{Z}_k$-invariant (or $D_{2k}$-invariant) function $\psi$,  can be realised using a $\mathbb{Z}_n$-invariant (or $D_{2n}$-invariant) function $\phi$ and a linear transformation, in specific, it can be realised through the following form,
\begin{equation}
    \psi(x) = \left(  \phi \cdot \hat{M} \right) (x) 
\end{equation}
where $\hat{M} = \begin{bmatrix} M \\ I - L   \end{bmatrix}$ for some $M, L \in \mathbb{R}^{n \times n}$. 
\label{Cyclic groups and Dihedral groups}
\end{theorem}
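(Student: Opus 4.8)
The plan is to mirror the two-step strategy of Theorem \ref{Theorem 4.3}: first reduce the ``random $k$'' case to a canonical subgroup by conjugacy, then exhibit an explicit linear map that turns the canonical cyclic (resp.\ dihedral) shift on $k$ coordinates into the full cyclic (resp.\ dihedral) action on $n$ coordinates. Write $n = km$, which is exactly where the divisibility hypothesis $k \mid n$ enters. As in Lemma \ref{Conjugacy lemma}, any $\mathbb{Z}_k$ (or $D_{2k}$) acting on a random set of $k$ indices is conjugate in $S_n$ to the canonical $\mathbb{Z}_k^{(0)}$ (or $D_{2k}^{(0)}$) acting on the first $k$ coordinates, and the conjugating permutation is a linear (permutation) map that I would fold into $M$ and $L$. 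It therefore suffices to realise an arbitrary $\mathbb{Z}_k^{(0)}$-invariant (resp.\ $D_{2k}^{(0)}$-invariant) function.

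Second, I would define $M \in \mathbb{R}^{n \times n}$ to be the $m$-fold periodic tiling of the first $k$ coordinates, so that $Mx = (x_1, \dots, x_k, x_1, \dots, x_k, \dots, x_1, \dots, x_k)$, and set $L = \mathrm{diag}(I_k, 0)$ so that $(I-L)x = (0, \dots, 0, x_{k+1}, \dots, x_n)$ passes through the fixed coordinates. The key observation is that, because $Mx$ has period $k$, a cyclic shift of the first $k$ coordinates of $x$ corresponds exactly to a unit cyclic shift of the length-$n$ vector $Mx$; hence a $\mathbb{Z}_n$-invariant function applied to the tiling block $Mx$ is automatically invariant under the $\mathbb{Z}_k^{(0)}$-action on $x$. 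For the dihedral case I would additionally check that reversing the order of the first $k$ coordinates of $x$ (the reflection generating $D_{2k}^{(0)}$) corresponds, again by periodicity, to reversing all $n$ coordinates of $Mx$ (the reflection generating $D_{2n}$).

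Third, to show that no information is lost --- i.e.\ that the $\mathbb{Z}_n$-invariant (resp.\ $D_{2n}$-invariant) summary of $Mx$ together with the passthrough block $(I-L)x$ determines the full $\mathbb{Z}_k^{(0)}$-orbit of $x$ --- I would argue, in the spirit of the homeomorphism arguments in Corollary \ref{Corr Sk} and Theorem \ref{Theorem 4.3}, that the induced map on orbit spaces is injective: if two tilings $Mx$ and $Mx'$ lie in the same $\mathbb{Z}_n$-orbit, then reducing the shift modulo $k$ and reading off the first $k$ entries forces $(x_1, \dots, x_k)$ and $(x_1', \dots, x_k')$ to be cyclic shifts of one another, so they share a $\mathbb{Z}_k^{(0)}$-orbit. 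This lets me define the outer $\mathbb{Z}_n$-invariant function $\phi$ on the image of $\hat{M}$ by $\phi(\hat{M}x) = \psi(x)$ and extend it continuously, giving $\psi = \phi \cdot \hat{M}$ with $\hat{M} = \begin{bmatrix} M \\ I - L \end{bmatrix}$.

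The main obstacle I anticipate is the injectivity step of the last paragraph: one must verify that quotienting $Mx$ by the full group $\mathbb{Z}_n$ (shifts by $1, \dots, n-1$), rather than by the order-$k$ subgroup it effectively acts through, does not collapse distinct $\mathbb{Z}_k^{(0)}$-orbits --- equivalently, that the stabiliser of a generic periodic tiling is exactly $\langle \tau^k \rangle$, so the effective action is $\mathbb{Z}_n / \mathbb{Z}_m \cong \mathbb{Z}_k$. The dihedral variant adds the bookkeeping of confirming the reflection compatibility and that the combined rotation--reflection orbit correspondence remains faithful; everything else is a routine reprise of the tiling construction and the passthrough argument already used for $S_k$.
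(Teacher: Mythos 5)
Your proposal follows essentially the same route as the paper's own proof: the same $m$-fold tiling matrix $M$, the same passthrough $L = \mathrm{diag}(I_{k\times k},0)$, the same reduction of the random-$k$ case to $\mathbb{Z}_k^{(0)}$ (resp.\ $D_{2k}^{(0)}$) by conjugacy, and the same periodicity argument identifying the $\mathbb{Z}_k^{(0)}$-action on $x$ with the $\mathbb{Z}_n$-action on $Mx$. Your third step simply makes explicit the orbit-space injectivity that the paper compresses into ``the converse is also true'' (and its appendix remark $\rho = \psi E^{-1}$), so the approach is the same, just more carefully spelled out.
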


\begin{proof}
In this proof, without loss of generality, we prove the result for $\mathbb{Z}_k^{(0)}$-invariant function. Suppose,

\begin{equation}
M = \begin{bmatrix}
I_{k \times k} &0 \\
I_{k \times k} &0 \\
\vdots &\vdots \\
I_{k \times k} &0 
\end{bmatrix}, \quad
L = \begin{bmatrix}
I_{k \times k} &0 \\
0 &0
\end{bmatrix},
\label{z-d-equation}
\end{equation}
Since $k | n$, we can stack the $I_{k \times k}$ matrices as shown in eq. (\ref {z-d-equation}). Then, $M : X \rightarrow X$ is defined as,
\begin{align}
    x = [x_1, x_2 \dots x_n]^T \longmapsto Mx = &[x_1, x_2, \dots x_k, \nonumber \\ 
     &x_1, x_2, \dots, x_k,  \nonumber \\ 
     &\vdots   \nonumber \\ 
     &x_1, x_2, \dots, x_k]^T
\end{align}

Under the action of $\mathbb{Z}_k$ ($h \cdot x$, for some $h \in \mathbb{Z}_k$), we get that,
\begin{equation}
    x \xmapsto{h} x' = [x_u, x_{u+1}, \dots , x_k, x_1, \dots, x_{u-1}]^T
\end{equation}
which corresponds to ($g \cdot \left(Mx \right)$, for some $g \in \mathbb{Z}_n$),
\begin{align}
    Mx \xmapsto{g} &Mx' = [x_u, x_{u+1}, \dots , x_k, x_1, \dots, x_{u-1} \nonumber \\
    & x_u, x_{u+1}, \dots , x_k, x_1, \dots, x_{u-1} \nonumber \\
    &\vdots \nonumber \\
    & x_u, x_{u+1}, \dots , x_k, x_1, \dots, x_{u-1}]^T
\end{align}

Similarly, the converse is also true, i.e., $\mathbb{Z}_n$-action on $Mx$ corresponds to $\mathbb{Z}_k$-action on $x$. Hence, the $\mathbb{Z}_k$-invariant function of $x$ corresponds to $\mathbb{Z}_n$-invariance of $Mx$. Note that, the $\mathbb{Z}_n$-invariance of the function $\phi$ is with respect to the first $n$ elements (out of $2n$) of its input vector. Similar proof holds for dihedral groups ($D_{2k}$ and $D_{2n}$).
\end{proof}

The above set of techniques can also be extended to other classes of subgroups. In this regard, we state the following general result. 
\begin{theorem}
\label{Theorem 4.5}
Any $H$-invariant function $\psi$ can be learnt through composing a $G$-invariant function $\phi$ with a linear transformation $M$, i.e., $\psi = \phi \cdot M$ if  the following conditions hold, 
\begin{enumerate}
\item For any $h \in H,$ $\exists g \in G$ such that  $M(h \cdot x) = g \cdot \left(Mx\right), \: \forall x \in X$
\label{Cond 1}
\item For any $g \in G$ such  that $g \cdot \left(Mx\right) \in R(M)$,  $\exists h \in H$ such that  $M(h \cdot x) = g \cdot \left(Mx\right), \: \forall x \in X$, where $R(M)$ is the range of $M$. 
\label{Cond 2}
\end{enumerate}
\label{generalization}
\end{theorem}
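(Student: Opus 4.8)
The plan is to construct the $G$-invariant function $\phi$ explicitly on the range $R(M)$ and then verify the two requirements separately, with Condition~\ref{Cond 2} supplying the invariance and Condition~\ref{Cond 1} supplying the consistency in the reverse direction. Concretely, I would define $\phi$ on $R(M)$ by the rule $\phi(Mx) := \psi(x)$ for each $x \in X$, and extend it arbitrarily (say by a constant) off $R(M)$, since only the values on $R(M)$ enter the composition $\phi \cdot M$. With this definition the factorization $\psi = \phi \cdot M$ holds by construction, so the whole content of the theorem lies in showing that (i) the rule is well defined and (ii) $\phi$ is $G$-invariant.

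For the $G$-invariance I would argue as follows. Fix $y = Mx \in R(M)$ and $g \in G$ with $g \cdot y \in R(M)$; this is exactly the hypothesis of Condition~\ref{Cond 2}, which therefore produces an $h \in H$ with $M(h \cdot x) = g \cdot (Mx) = g \cdot y$. Then, invoking the definition of $\phi$ twice together with the $H$-invariance of $\psi$, I would obtain
\begin{equation}
\phi(g \cdot y) = \phi\big(M(h \cdot x)\big) = \psi(h \cdot x) = \psi(x) = \phi(Mx) = \phi(y),
\end{equation}
which is precisely the invariance of $\phi$ along the $G$-orbit directions that remain inside $R(M)$. To promote this to a genuinely $G$-invariant $\phi$ on the whole domain of the $G$-invariant network, I would redefine $\phi$ to be constant on entire $G$-orbits, setting $\phi(z) := \psi(x)$ whenever some point of $G \cdot z$ equals $Mx$; Condition~\ref{Cond 2} guarantees this assignment is unambiguous wherever two such representatives coexist.

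Condition~\ref{Cond 1} then closes the correspondence and certifies that the construction lands in the intended function class: for any $G$-invariant $\phi$ and any $h \in H$, one picks $g \in G$ with $M(h \cdot x) = g \cdot (Mx)$ and computes $(\phi \cdot M)(h \cdot x) = \phi(g \cdot (Mx)) = \phi(Mx) = (\phi \cdot M)(x)$, so that $\phi \cdot M$ is automatically $H$-invariant. This is what makes $\psi = \phi \cdot M$ a consistent ansatz rather than an overconstrained one, and it mirrors the complementary roles the two inclusions play in the special cases of Theorems~\ref{permutation groups} and~\ref{Cyclic groups and Dihedral groups}.

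I expect the main obstacle to be the well-definedness of $\phi$ on $R(M)$, i.e.\ showing that $Mx_1 = Mx_2$ forces $\psi(x_1) = \psi(x_2)$. The two stated conditions govern the group actions but say nothing directly about the fibers of $M$, so some additional input is needed; in every instantiation of the framework the augmented map (the $\hat{M}$ of the earlier theorems) is injective on $X$ because $I - M$ and $M$, or $M$ and $I - L$, jointly reconstruct $x$, and I would either invoke this injectivity or add it as an explicit hypothesis. Once well-definedness is secured, the invariance computation above together with the homeomorphism and canonical-form machinery of Theorem~\ref{deepsets result} and Corollary~\ref{Corr Sk} yields the representability of $\phi$ by a $G$-invariant network, completing the proof.
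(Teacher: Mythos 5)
Your proof is correct (given the extra hypothesis you flag) and is genuinely different from --- in fact substantially more complete than --- the paper's own argument. The paper's proof of Theorem \ref{generalization} is two sentences long: it restates Condition \ref{Cond 1} as ``every action $h \cdot x$ of $H$ on $X$ corresponds to an action $g \cdot (Mx)$ of $G$ on $R(M)$,'' restates Condition \ref{Cond 2} as the converse correspondence, and asserts that the claim ``directly follows.'' It never constructs $\phi$, never extends it off $R(M)$, and never addresses well-definedness. Your push-forward definition $\phi(Mx) := \psi(x)$, the extension of $\phi$ to be constant on $G$-orbits meeting $R(M)$, and the division of labour you identify --- Condition \ref{Cond 2} makes $\phi$ $G$-invariant, while Condition \ref{Cond 1} makes every composite $\phi \cdot M$ automatically $H$-invariant, so the ansatz class is neither too small nor too large --- is precisely the argument that the paper's sketch gestures at but does not carry out.

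The obstacle you isolate is a genuine defect of the theorem as stated, not of your proof. Conditions \ref{Cond 1} and \ref{Cond 2} constrain only the group actions and say nothing about the fibers of $M$, so they cannot by themselves imply that $\psi$ is constant on those fibers: take $M = 0$ with $H$ and $G$ trivial (or acting trivially); both conditions then hold vacuously, every function $\psi$ is $H$-invariant, yet only constant functions factor through $M$. Hence some hypothesis such as the injectivity you propose --- or the weaker fiber condition that $Mx_1 = Mx_2$ implies $x_2 \in H \cdot x_1$ --- must be added for the statement to be true. This is exactly what holds in the paper's concrete instances: in Lemma \ref{Conjugacy lemma} the transformation is an invertible permutation matrix, and in Theorems \ref{permutation groups} and \ref{Cyclic groups and Dihedral groups} the stacked map $\hat{M}$ is injective because $(I-M)x$ and $Mx$ (respectively $Mx$ and $(I-L)x$) jointly reconstruct $x$. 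With that hypothesis made explicit, your construction is a correct and complete proof of the result; the paper's proof silently presupposes what you have surfaced.
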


\begin{proof} 
The claim directly follows from the following observations.

Condition (\ref{Cond 1}) states that, any action  $h \cdot x$ (action of $H$ on $X$) corresponds to an action $g \cdot (Mx)$ (action of $G$ on $R(M)$). 

Similarly, condition (\ref{Cond 2}) states that, any action  $g \cdot (Mx)$  corresponds to an action $h \cdot x$. 
\end{proof}

\section{DISCUSSION}
\label{discussion}
The underlying theme from the results stated in the previous section is that we could discover any subgroup belonging to a particular class of subgroups by learning a $G$-invariant function and a linear transformation. Depending on the class, the chosen G varies. We further elaborate on these observations in the following subsections.
\begin{figure}[htp]
\centering
\includegraphics[width=\columnwidth]{"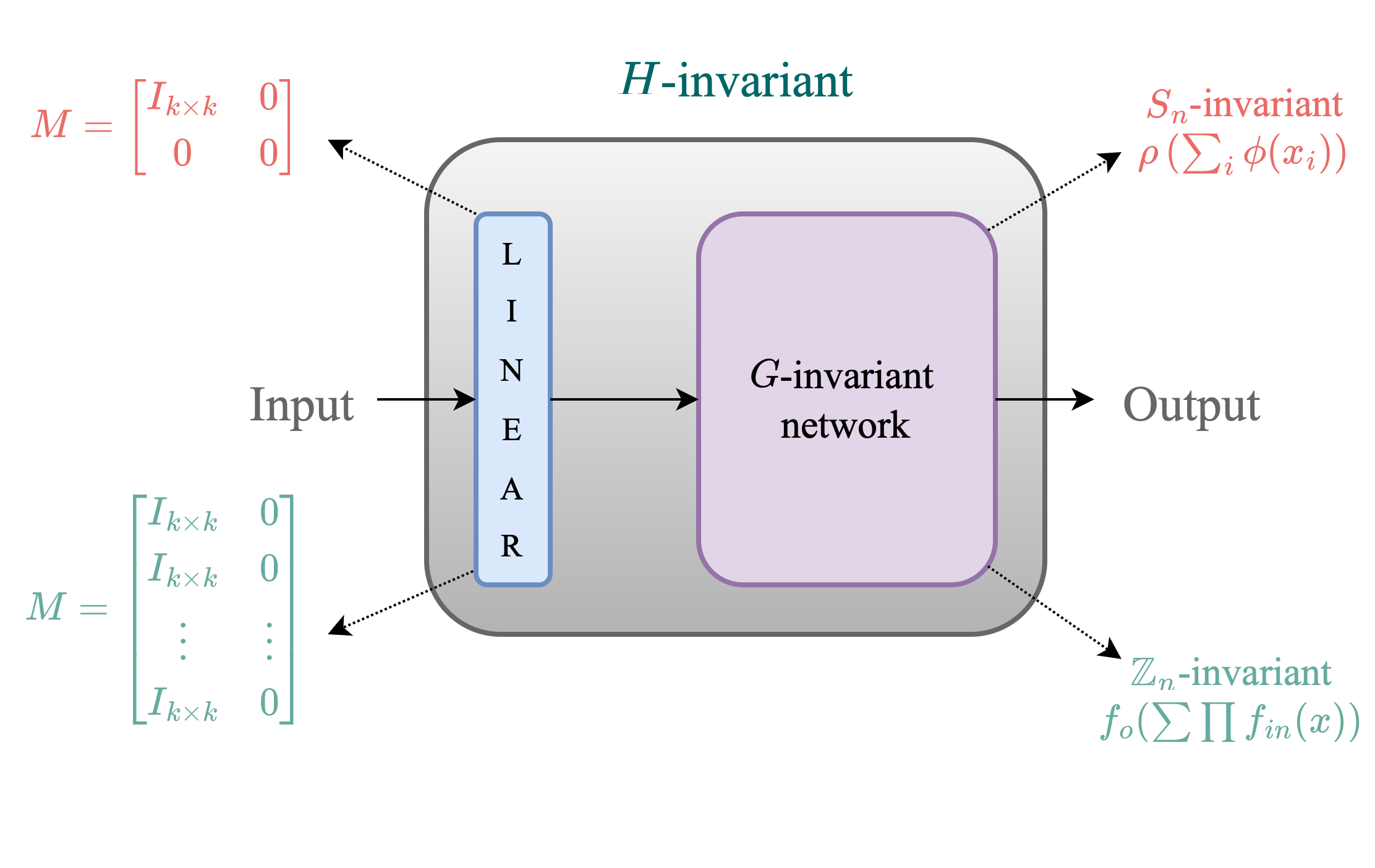"}
\caption{Generic framework for learning $H$-invariant function. The dotted arrows point towards specific examples of linear and $G$-invariant functions. The corresponding $H$-invariant functions are {\color{salmon}$S_k$-invariant} and {\color{teal}$\mathbb{Z}_k$-invariant}.}
\label{Generic method}
\end{figure}

\subsection{Conjugate Groups}
In Lemma \ref{Conjugacy lemma}, the class of subgroups corresponds to those of type $S_k$ (fixed $k$) and the corresponding $G$ can be $S_k^{0}$. We observe that, for a fixed $k$, even if we don't know the exact underlying subgroup $S_k$(a total of $\binom{n}{k}$ possibilities), we could learn this unknown subgroup. In addition, we also incorporate the canonical form of permutation invariant functions in the resulting architecture. Moreover, this result can be generalized to any class of conjugate subgroups, and the corresponding $G$ is one of these conjugate groups. The significance of this result lies in the fact that a variety of subgroups are related through conjugation. For instance, all $\mathbb{Z}_k$ form one conjugacy class for a given $k$, and so does $A_k$'s. 

This result is not entirely helpful if the underlying subgroup is normal since it is not conjugate to any other subgroup. However, this is not much of a hindrance since the only non-trivial proper normal subgroup of $S_n$ is $A_n, \; \forall n \geq 5$.

\subsection{$S_k$, $\mathbb{Z}_k$ and $D_{2k}$ Subgroups}
\label{Section 5.2}
Theorem \ref{permutation groups} focuses on subgroups of type $S_k$ (varying $k$ and $k \in \{1, 2, \dots, n \}$), and the corresponding $G$ is $S_n$ itself. We incorporate the canonical form of permutation invariant functions here as well. We observe that the number of such subgroups is $2^n - 1$ for a given $n$. Hence, we could learn any of these subgroups with the standard architecture of an $S_n$-invariant function and a linear transformation. Note that if $k$ is fixed, either of the architectural forms given by Lemma \ref{Conjugacy lemma} and Theorem \ref{permutation groups} is applicable. We will discuss the corresponding empirical results in the coming sections. 
Theorem \ref{Cyclic groups and Dihedral groups} considers subgroups of the cyclic $\mathbb{Z}_k$ and dihedral group $D_{2k}$. The corresponding $G$-invariant functions are of $\mathbb{Z}_n$ and $D_{2n}$, respectively.

\subsection{Generalization}
Theorem \ref{generalization} presents a general set of conditions to be satisfied to learn any $H$-invariant function using a $G$ invariant function and a linear transformation. As such, the previous results are specific cases of this Theorem. However, they provide explicit structures of the linear transformation $M$. These can help design appropriate training techniques to learn the optimum $M$, while the general result of Theorem \ref{generalization} can guide us towards discovering results for new classes of subgroups.

\subsection{Limitations}
The proposed framework presumes the knowledge of the underlying class of subgroups apriori (but not the exact subgroup) and an appropriate value of $n$ for $S_n$, $\mathbb{Z}_n$ or $D_{2n}$ invariant functions. The drawbacks mentioned here are interesting research directions to pursue in the future.

\section{EXPERIMENTS}
We evaluate the accuracy of our proposed method on image-digit sum and symmetric polynomial regression tasks. The problem of image-digit sum can be modified and cast as learning an $S_k$-invariant function, while the polynomial regression task intrinsically corresponds to learning a $G$-invariant function. These are summarized in the following subsections. 
\subsection{Image-Digit Sum}
This task aims to find the sum of $k$ digits using the MNISTm (\cite{loosli2007training}) handwritten digits dataset. It  consists of $8$ million gray scale $28 \times 28$ images of digits $\{0,1,...,9\}$. We employ a training set of $150k$ samples and a test set of $30k$ samples. We consider the following approaches for evaluation.
\begin{enumerate}
    \item \textbf{Deep Sets-$S_{k}$}:- $S_k$-invariant neural network proposed by \cite{zaheer2017deep}.
    \item \textbf{LSTM}:- LSTM network as mentioned in \cite{zaheer2017deep}.
    \item \textbf{Proposed method}:- A linear layer followed by an $S_n$-invariant network.
\end{enumerate}
For the LSTM network and the proposed method, the input is a random sample of  $n$ ($n$ = 10) images, and the target is the sum of $k$ ($k$ less than $n$) digit labels. We run separate experiments for each of $k \in \{1,3,5,7,9\}$. Since all $n$ images are given as input, the two approaches are agnostic of the underlying subgroup. However, we feed only these $k$ of these images as input for the first approach, while the target output remains the same. As such, this task is equivalent to learning an $S_k$-invariant function.

\subsection{Symmetric Polynomial Regression}
We evaluate the performance of our method on symmetric polynomial regression tasks as discussed in \cite{kicki2020computationally}, primarily for subgroups of $\mathbb{Z}_{10}$ and $\mathbb{Z}_{16}$. For all our experiments, we utilize a $\mathbb{Z}_{n}$-invariant neural network with a Sum-Product layer as discussed in \cite{kicki2020computationally} and a linear layer. First, we run our experiments for subgroups of $\mathbb{Z}_{10}$, i.e., $\mathbb{Z}_5$ and the group itself (trivial subgroup). We then access the performance for subgroups of $\mathbb{Z}_{16}$, namely $\mathbb{Z}_2$, $\mathbb{Z}_4$, $\mathbb{Z}_8$, $\mathbb{Z}_{16}$ using a similar architectural design. We consider the following approaches for evaluation.

\begin{enumerate}
    \item \textbf{G-invariant}:- $\mathbb{Z}_k$-invariant neural network proposed by \cite{kicki2020computationally}. In this context, $G=\mathbb{Z}_k$. 
    \item \textbf{Simple-FC}:- A stack of fully-connected feedforward layers.
    \item \textbf{Conv-1D}:- A simple convolutional neural network and feedforward layers.
    \item \textbf{Proposed method}:- A linear layer followed by a $\mathbb{Z}_n$-invariant network.
\end{enumerate}
The architectural details of the models considered in our experiments are discussed in the appendix section.

\section{RESULTS}
\subsection{Image-Digit Sum}
\label{Result section for Image-Digit Sum}
The test mean absolute errors (MAEs) for the image-digit sum task are shown in Table \ref{MAE for Image Digit-Sum task}. We observe that the proposed method outperforms the LSTM baseline and is competitive with respect to the Deep Sets method (k input images) when the underlying subgroup $S_{k}$ is known. In addition, our method converges faster when compared to the LSTM network, which is apparent from the plots for the training and validation errors in Figure \ref{Training and Validation loss (MAE) for Image-Digit Sum using MNIST dataset.}.
\begin{table}[htp]
\caption{MAE [$\times 10^{-2}$] for Image Digit-Sum task} 
\label{MAE for Image Digit-Sum task}
\begin{center}
\resizebox{\columnwidth}{!}{\begin{tabular}{||l| l| l| l| l| l||}
\hline 
\textbf{Method}  &\textbf{$S_1$} &\textbf{$S_3$} &\textbf{$S_5$} &\textbf{$S_7$} &\textbf{$S_9$} \\
\hline
Deep Sets-$S_{k}$   &  $5.61 \pm 0.35$ & $7.66 \pm 0.26$ & $8.02 \pm 0.2$ & $7.68 \pm 0.43$ &$6.97 \pm 0.39$ \\
Proposed     &   $5.73 \pm 0.39$  & $7.78 \pm 0.49$ & $8.19 \pm 0.36$ & $7.84 \pm 0.41$ &$7.26 \pm 0.58$ \\
LSTM         &   $6.23 \pm 0.53$  & $9.65 \pm 0.57$ & $11.98 \pm 0.46$ & $13.35 \pm  1.02$ &$12.92 \pm 1.42$ \\
\hline
\end{tabular}}
\end{center}
\end{table}

\begin{figure}[h]
\includegraphics[width=\columnwidth, trim={3cm 0 2.5cm 0}, clip]{"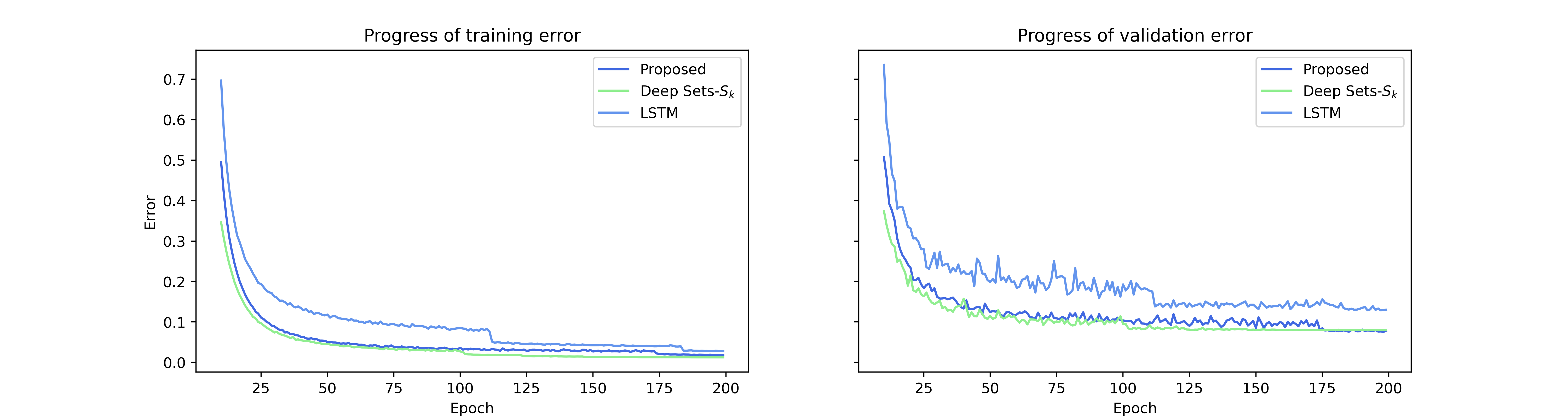"}
\caption{Training and Validation loss (MAE) for Image-Digit Sum using MNIST dataset.}
\label{Training and Validation loss (MAE) for Image-Digit Sum using MNIST dataset.}
\end{figure}

\subsection{Symmetric Polynomial Regression}
\label{Result section for Z-invariant Symmetric Polynomial Regression}
In the $\mathbb{Z}_{k}$-invariant polynomial regression task, we train our models for 2500 epochs for each of the subgroups of $\mathbb{Z}_{5}$ and $\mathbb{Z}_{10}$.

In Table \ref{Z_5:Z_10}, \ref{Z_10:Z_10} and \ref{Z_4:Z_16} we compare the given baselines with our proposed method for the task of discovering unknown subgroups. Our method outperforms the Simple-FC and Conv-1D baseline networks for each of the given subgroups. As expected, it does not match the baseline architecture, the $\mathbb{Z}_k$-invariant network (the subgroup is known apriori for this baseline) by a significant margin for each of the diverse set of subgroups we have considered in this task. However, in a few cases, we observe large standard deviations and attribute such values to outliers. A detailed version of our results and the mathematical definition of the polynomials is presented in the appendix section.

From Figure \ref{Z5THROUGHZ10}, it is evident that the $\mathbb{Z}_{5}$-invariant function outperforms both our method and the baselines by a significant margin. The Simple FC and Conv-1D networks have very similar performances and show no prominent effect, even with an increase in data size.
\begin{table}[htp]
\caption{MAE $[\times10^{-2}]$ for $\mathbb{Z}_{5} : \mathbb{Z}_{10}$} 
\label{Z_5:Z_10}
\begin{center}
\resizebox{\columnwidth}{!}{\begin{tabular}{||l| l| l| l||}
\hline 
\textbf{Method}  &\textbf{Train} &\textbf{Validation} &\textbf{Test} \\
\hline
$\mathbb{Z}_{5}$-invariant          &$2.65 \pm 0.91$       &$7.32 \pm 0.55$	   &$7.53 \pm 0.576$ \\
Proposed        &$4.48 \pm 1.25$	       &$24.56 \pm 6.93$    &$24.78  \pm 6.45$ \\
Conv-1D         &$20.90 \pm 4.91$	   &$32.96 \pm 1.31$	   &$32.33 \pm 1.18$\\
Simple-FC       &$23.86 \pm 3.87$	   &$33.57 \pm 2.07$	   &$33.14 \pm 2.11$\\
\hline
\end{tabular}}
\end{center}
\end{table}

\begin{table}[htp]
\caption{MAE $[\times10^{-2}]$ for $\mathbb{Z}_{10} : \mathbb{Z}_{10}$} 
\label{Z_10:Z_10}
\begin{center}
\resizebox{\columnwidth}{!}{\begin{tabular}{||l| l| l| l||}
\hline 
\textbf{Method}  &\textbf{Train} &\textbf{Validation} &\textbf{Test} \\
\hline
$\mathbb{Z}_{10}$-invariant           &$6.89 \pm 1.31$	      &$16.68 \pm 0.55$	&$17.16 \pm 0.56$ \\
Proposed        &$14.52 \pm 1.72$	  &$39.69 \pm 4.13$	&$40.11 \pm 4.17$ \\
Conv-1D        &$35.71 \pm 2.71$	   &$52.96 \pm 0.70$	   &$50.63 \pm 1.33$\\
Simple-FC       &$46.13 \pm 2.27$	      &$54.62 \pm 1.34$	&$51.64 \pm 0.89$\\
\hline
\end{tabular}}
\end{center}
\end{table}

\begin{table}[htp]
\caption{MAE $[\times10^{-2}]$ for  $\mathbb{Z}_{4} : \mathbb{Z}_{16}$} 
\label{Z_4:Z_16}
\begin{center}
\resizebox{\columnwidth}{!}{\begin{tabular}{||l| l| l| l||}
\hline 
\textbf{Method}  &\textbf{Train} &\textbf{Validation} &\textbf{Test} \\
\hline
$\mathbb{Z}_{4}$-invariant         &$1.21 \pm 0.25$	      &$3.41\pm0.4$	&$3.54\pm0.39$ \\
Proposed        &$3.32\pm1.65$	  &$23.70\pm4.87$	&$24.69\pm5.25$ \\
Conv-1D        &$8.39 \pm 3.02$	   &$31.34 \pm 0.77$	   &$31.10 \pm 0.87$\\
Simple-FC       &$7.27\pm5.03$	      &$30.82\pm1.74$	&$30.83\pm1.61$\\
\hline
\end{tabular}}
\end{center}
\end{table}

\subsection{Effect of the data size on the performance}
This section aims to assess the effect of the dataset size in learning $\mathbb{Z}_k$-invariant functions using our proposed method and hope to gain a better understanding in such a setting. To analyze our model performance with respect to data size, we use 16, 32, and 64 data points for training (as mentioned in \cite{kicki2020computationally}, we randomly sample these values from [0,1]) and use 480 and 4800 as validation and test sets respectively to assess the generalization ability for each of these methods as mentioned above. We report the mean and standard deviation values across 10 randomly initialized iterations.

\begin{figure}[htp!]
\centering
\includegraphics[width=0.7\columnwidth,height=0.5\columnwidth]{"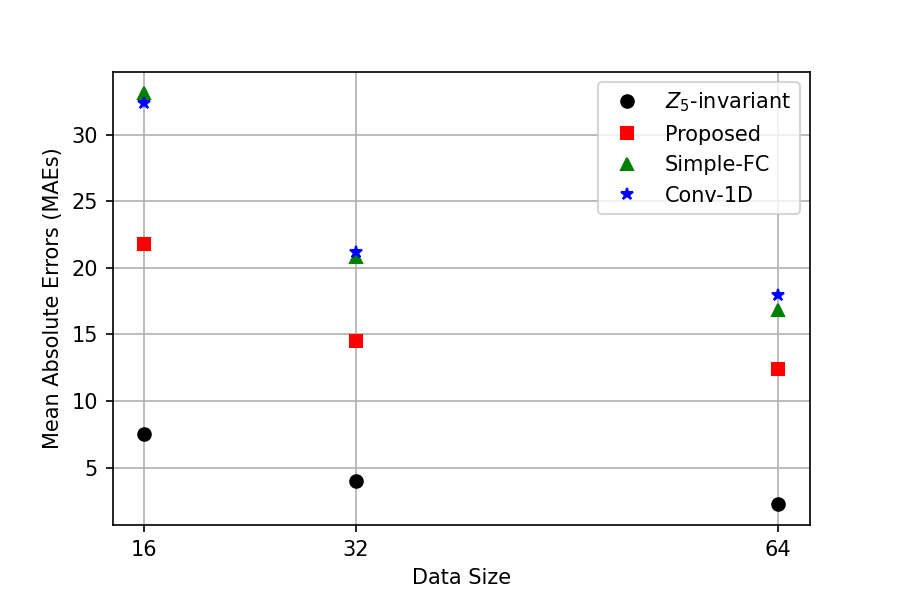"}
\caption{The MAE value comparisons using the test dataset for all the models we have considered for the $\mathbb{Z}_{5} : \mathbb{Z}_{10}$ task. The $X$-axis represents the size of the training set $(16, 32, 64)$.}
\label{Z5THROUGHZ10}
\end{figure}
We also examine the Simple-FC and Conv-1D network by increasing its parameter count, i.e., varying the number of neurons in each layer. However, we observe no significant gains in doing so, as mentioned in the appendix section for at least a few subgroups.
\subsection{Interpretability}
\begin{figure}[htp!]
    \centering
    \subfloat[\centering $S_5$]{{\includegraphics[width=.5\linewidth, trim={0cm 0 0cm 3cm}, clip]{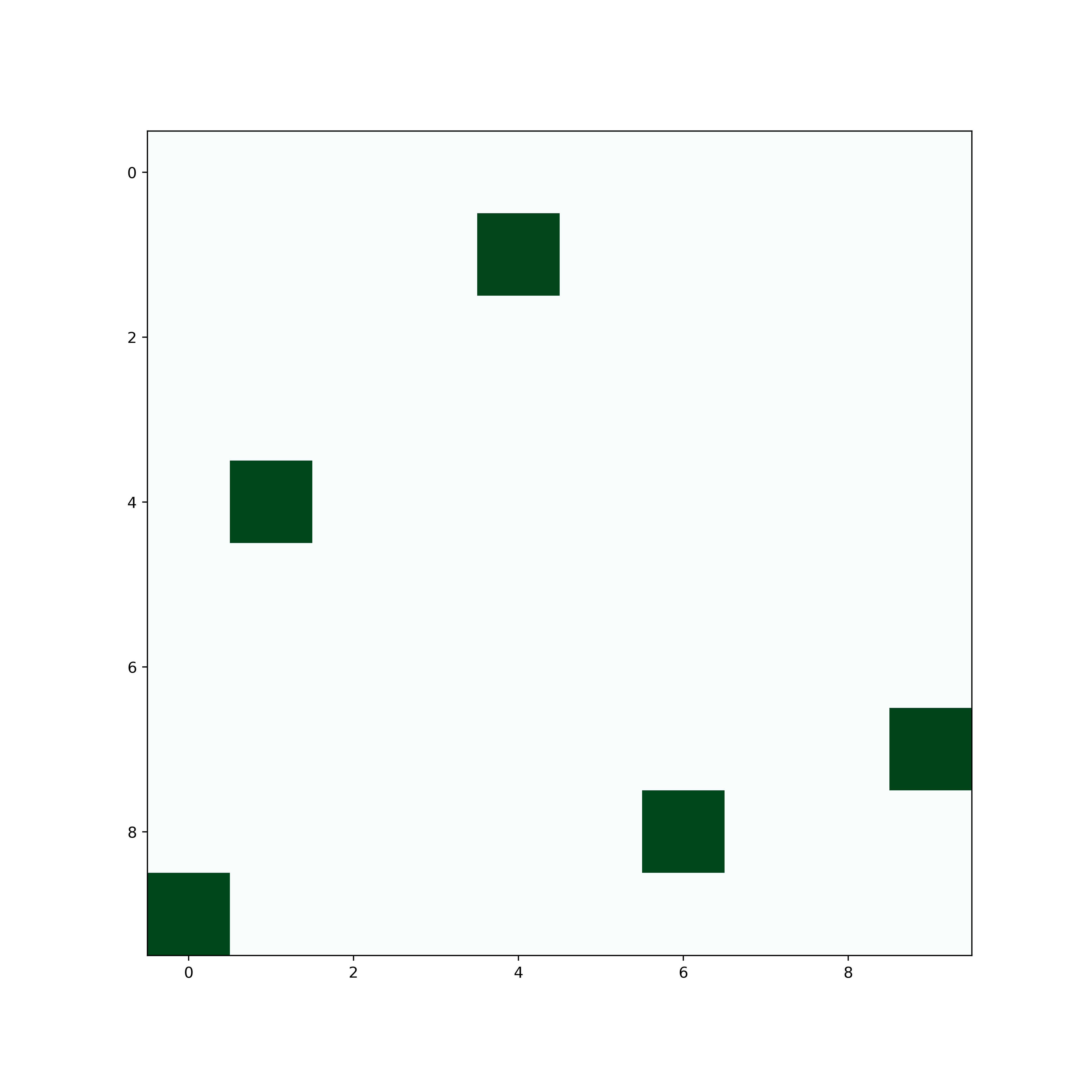} }}%
    \subfloat[\centering $S_9$]{{\includegraphics[width=.5\linewidth, trim={0cm 0 0cm 3cm}, clip]{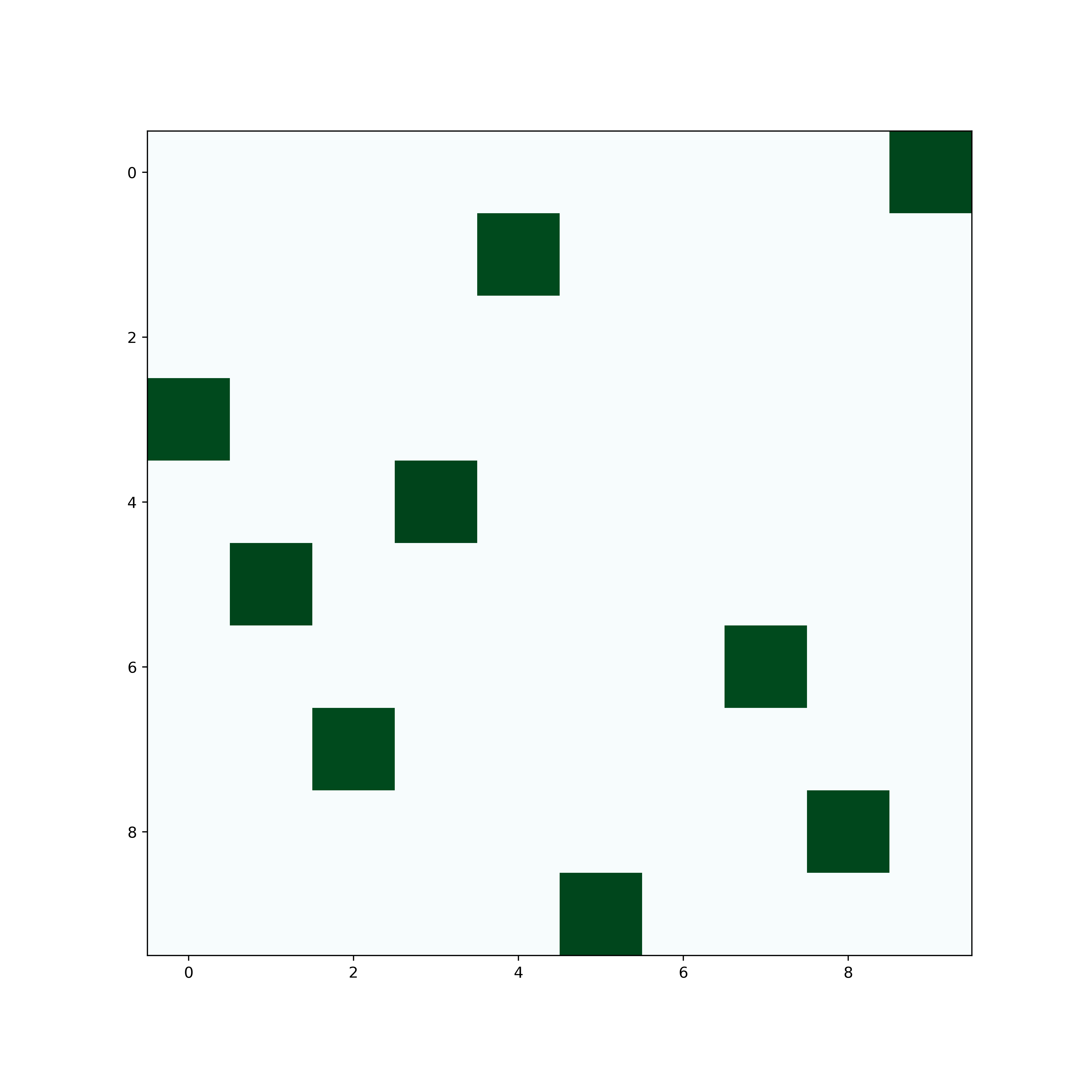} }}%
    \caption{$M$ matrices for $S_5$ and $S_9$ after training. }
    \label{M_matrix_Idx}
\end{figure}

\subsubsection{Image-Digit Sum}
The resulting M matrix is interpretable, and we consistently observe the expected pattern for the image-digit sum task. Note that any row-permuted version of the matrix structure, as shown  in eq. {\color{red} (\ref{M_matrix_S_k})} will work since the transformed space is still homeomorphic. The $M$ matrices for $S_5$ and $S_9$ (extracted after training) are depicted in Figure \ref{M_matrix_Idx}. The columns with dark green squares match the actual indices.


\begin{figure}[h]
\centering
\includegraphics[width=\columnwidth]{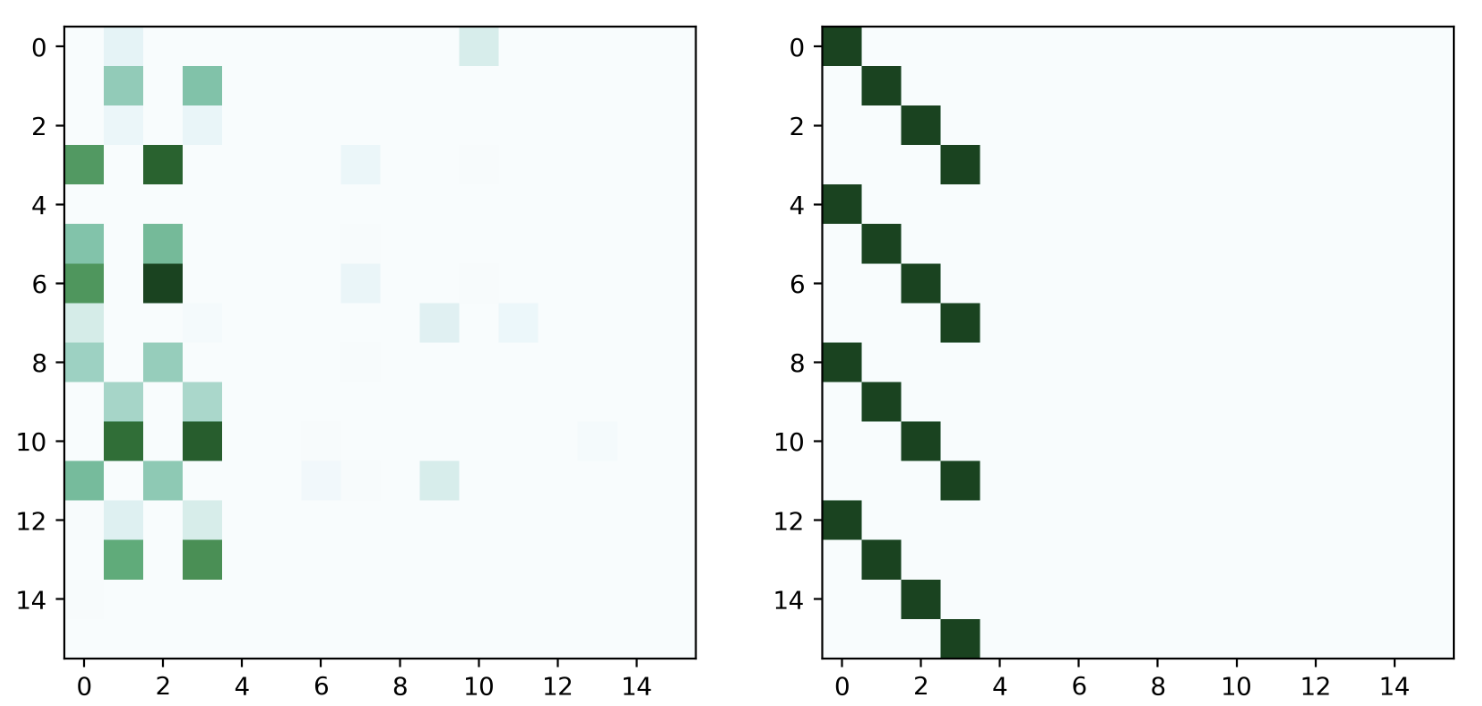}
\caption{(a) $M$ matrix for $\mathbb{Z}_4: \mathbb{Z}_{16}$ (b) Reference matrix.}
\label{small_size_both_matrices}
\end{figure}

\subsubsection{Polynomial Regression}
We observe that the $M$-matrix extracted after training (Figure ({\color{red}}\ref{small_size_both_matrices}{\color{red}.a})) does not exactly capture the expected pattern, i.e., a stack of identity matrices (Figure ({\color{red}}\ref{small_size_both_matrices}{\color{red}.b})), even though it nearly masks most of the irrelevant columns ($n-k$). The former behavior (lack of exact structure) explains the difference in performance with respect to the $\mathbb{Z}_k$-invariant network, while the latter (masking behavior) describes the superior model performance compared to other baselines. Also, the masking of irrelevant columns already conveys the underlying subgroup; thus, we use this information to estimate the true indices. We estimate the significant indices using the $L1$-norm of columns of $M$ and the mean as the threshold. The results (for different number of training data points $N$ and different $\mathbb{Z}_k:\mathbb{Z}_n$'s) of the success rate of the estimation are given in Table {\color{red}} \ref{Estimation Accuracy}, where we count the estimation as success when the estimated indices exactly match the true indices; otherwise, as a failure. We run each experiment for $10$ trials. We get high estimation accuracy in most of the cases except for $N = 16$. The estimated indices can be used to run a $\mathbb{Z}_k$-invariant network (or proposed method with fixed $M$) and obtain better performance on regression tasks.

\begin{table}[htp]
\caption{Estimation Accuracy (in \%)} 
\label{Estimation Accuracy}
\begin{center}
{\begin{tabular}{||l| l| l| l||}
\hline 
\textbf{${Z}_k:{Z}_n$}  &\textbf{$16$} &\textbf{$32$} &\textbf{$64$} \\
\hline
${Z}_4:{Z}_{16}$   &  $100$ & $100$ & $100$ \\
${Z}_5:{Z}_{10}$   &   $80$  & $100$ & $100$ \\
${Z}_8:{Z}_{16}$   &   $30$  & $80$ & $100$ \\
\hline
\end{tabular}}
\end{center}
\end{table}

\section{CONCLUSION}
In this work, we studied the problem of discovering the underlying subgroup of $S_n$, i.e., learning a $H$-invariant function where $H$ is an unknown subgroup of $S_n$. We proved that we could learn any $H$-invariant function using a $G$-invariant function and a linear transformation provided $H$ belongs to a specific class of subgroups. We considered various subgroups, such as conjugate subgroups, permutation subgroups of $k$ elements, and cyclic and dihedral subgroups, and illustrated unique structures of the corresponding linear transformations. We demonstrated the validity of our theoretical analysis through empirical results. We also discussed the limitations of our method, which may lead to exciting research directions in the future.

\bibliography{main}
\onecolumn
\aistatstitle{Supplementary Materials}
\vspace{-2.3cm}
$$$$
$$$$
\vspace{-1.25cm}
\section{Appendix}
\subsection{Remarks regarding theoretical results}
\begin{itemize}
    \item To obtain the function $\rho$ mentioned in Lemma \ref{Lemma 4.2}, Theorem \ref{Theorem 4.3}, and Theorem \ref{Theorem 4.4} of the main paper, we use the same technique presented in the proof of Corollary \ref{Corr Sk}, i.e., $\rho = \psi E^{-1}$, where $E$ is the corresponding homeomorphism.
    \item Lemma \ref{Lemma 4.2}, Theorem \ref{Theorem 4.3}, and Theorem \ref{Theorem 4.4} are special cases of Theorem \ref{Theorem 4.5}. This claim directly follows once we specify the corresponding group actions. As such, the proofs of Lemma \ref{Lemma 4.2} and Theorem \ref{Theorem 4.4} already describe the required group actions. However, this is not obvious in Theorem \ref{Theorem 4.3}. In that case, we observe the following:
\begin{align}
    x = [x_1, x_2 \dots x_n]^T \longmapsto Mx = &[0,0, \;\overset{k}{\dots}\;, 0, \; x_{k+1}, x_{k+2}, \dots x_n,  \nonumber \\
    &x_1, x_2, \dots x_k, \; 0,0, \;\overset{n-k}{\dots}\;, 0]^T,
\end{align}
where $M = \begin{bmatrix}
I_{k \times k} &0 \\
0 &0
\end{bmatrix}$ as mentioned in eq. (\ref{M_matrix_S_k}) of the main paper.

Under the action of $S_k^{(0)}$ \Big($h \cdot x, \;$ for some $h \in S_k^{(0)}$ \Big), we get,
\begin{equation}
    x \xmapsto{h} x' = [x_{h(1)}, x_{h(2)}, \dots , x_{h(k)}, x_{k+1}, \dots, x_n]^T
\end{equation}
which corresponds to \Big($g \cdot \left(Mx \right)$, for some $g \in S^n_{2n}$ \Big) \Big($S^n_{2n}$ is the group of permutations of first $n$ elements out of $2n$ elements\Big),
\begin{align}
    Mx \xmapsto{g} Mx' = &[0, 0, \;\overset{k}{\dots}\; 0, \; x_{k+1}, x_{k+2}, \dots, x_n, \nonumber \\
    &x_{g(1)}, x_{g(2)}, \dots , x_{g(k)},\; 0, 0, \;\overset{n-k}{\dots}\;, 0]^T 
\end{align}    
\end{itemize}
Similary, the action of $S^n_{2n}$ on $R(M)$ (range of M) corresponds to the action of $S_k^{(0)}$ on $X$.

In the following subsections, we describe the architectures of various models and additional resutls considered in our experiments. 
\begin{table}[htp]
\vspace{-1em}
\caption{MAE [$\times 10^{-2}$] Image Digit-Sum task} 
\label{MAE for baselines}
\begin{center}
\resizebox{0.7\columnwidth}{!}{\begin{tabular}{||l| l| l| l| l| l||}
\hline 
\textbf{Method}  &\textbf{$S_1$} &\textbf{$S_3$} &\textbf{$S_5$} &\textbf{$S_7$} &\textbf{$S_9$} \\
\hline
LSTM         &   $6.23 \pm 0.53$  & $9.65 \pm 0.57$ & $11.98 \pm 0.46$ & $13.35 \pm  1.02$ &$12.92 \pm 1.42$ \\
Conv-1D   &  $36.32 \pm 0.12$ & $19.11 \pm 0.49$ & $27.92 \pm 0.41$ & $35.42 \pm 0.19$ &$40.83 \pm 0.11$ \\
Simple FC     &   $25.26 \pm 0.01$  & $18.18 \pm 0.15$ & $35.27 \pm 0.07$ & $44.51 \pm 0.58$ &$51.79 \pm 0.89$ \\
\hline
\end{tabular}}
\end{center}
\end{table}
\subsection{Image-Digit sum}
\begin{enumerate}
    \item \textbf{Deep Sets-$S_{k}$}:- $S_k$-invariant neural network proposed by \cite{zaheer2017deep}. It consists of two networks, $\gamma$, and $\rho$. Each element in the input is passed through the $\gamma$ network, followed by the sum operation. The result is then fed to the second network $\rho$. The network $\gamma$ is a feed-forward network consisting of three \textit{dense} layers with \textit{tanh} activation, and the second network is a \textit{dense} layer.    
    \item \textbf{LSTM}:- The LSTM network used for comparison in \cite{zaheer2017deep}. It consists of two \textit{dense} layers, an \textit{LSTM} layer followed by two \textit{dense} layers. The activation used is \textit{tanh} function.
     \item \textbf{Proposed method}:- An $S_n$-invariant network follows a linear layer. The $S_n$-invariant network has the same architecture as Deep Sets (the first approach) except the input layer. 
\end{enumerate}

\begin{table}[h]
\vspace{-0.75em}
\caption{Definitions of the various polynomials used in the main paper.} 
\label{Poly Def}
\begin{center}
\resizebox{0.5\columnwidth}{!}{\begin{tabular}{||l| l||}
\hline 
\textbf{INVARIANCE}  &\textbf{POLYNOMIAL}  \\
\hline
$\mathbb{Z}_2:\mathbb{Z}_{16}$    &$x_1x_2^2+x_2x_1^2$		  \\
&\\
$\mathbb{Z}_4:\mathbb{Z}_{16}$    &$x_1x_2^2+x_2x_3^2+x_3x_4^2+x_4x_1^2$		\\
&\\
$\mathbb{Z}_5:\mathbb{Z}_{10}$    &$x_1x_2^2+x_2x_3^2+x_3x_4^2+x_4x_5^2+x_5x_1^2$	      \\
&\\
$\mathbb{Z}_8:\mathbb{Z}_{16}$    &$x_1x_2^2+x_2x_3^2+ ... + x_7x_8^2+x_8x_1^2$		  \\
&\\
$\mathbb{Z}_{10}:\mathbb{Z}_{10}$    &$x_1x_2^2+x_2x_3^2+ ... + x_9x_{10}^2+x_{10}x_1^2$		  \\
&\\
$\mathbb{Z}_{16}:\mathbb{Z}_{16}$    &$x_{1}x_{2}^2+x_{2}x_{3}^2+ ... + x_{15}x_{16}^2+x_{16}x_{1}^2$		\\
\hline
\end{tabular}}
\end{center}
\end{table}
\vspace{-2em}

\subsection{Comparison between $S_k$-invariant networks with backbone as $S_k^{(0}$ and $S_n$}
As specified in Section \ref{Section 5.2} of the main paper, any $S_k$-invariant network can be realized through either an $S_k^{(0)}$ or an $S_n$-invariant network and a linear layer when $k$ is fixed. In general, we observed that the $S_n$-invariant network as the backbone does better than the $S_k^{(0)}$ network. We attribute this to the expressivity power of the linear transformation (based on its specific structure) when an $S_n$ invariant network is used.


\subsection{Symmetric Polynomial Regression}

\begin{enumerate}
    \item \textbf{G-invariant}:- $\mathbb{Z}_k$-invariant network implemented using the design described in \cite{kicki2020computationally}. As discussed in \cite{kicki2020computationally}, it is a composition of a $\mathbb{Z}_k$-equivariant network and a Sum-Product Layer.  It then uses a Multi-Layer Perceptron to process the $\mathbb{Z}_k$-invariant representation of the input and thus predicts the polynomial output. The network is thus invariant under the action of the given permutation subgroup $\mathbb{Z}_k$.    
    \item \textbf{Simple-FC}:-  This is an abbreviation of a fully-connected neural network without the Reynolds operator, i.e., group averaging for this baseline implementation (\cite{derksen2001computational}). 
    \item \textbf{Conv-1D}:- This is an abbreviation of the 1D Convolutional neural network equipped with fully-connected layers. 
    \item \textbf{Proposed method}:- To discover the underlying subgroup, we use a  $\mathbb{Z}_n$-invariant neural network with the addition of a linear layer. The architectural design of the $\mathbb{Z}_n$-invariant function is the same as the $G$-invariant network.
\end{enumerate}
The hyperparameters of the above models are given in Table. \textcolor{red}{(6-9)} of \cite{kicki2020computationally}.
\begin{table}[h]
\caption{MAE $[\times10^{-2}]$ Polynomial Regression} 
\label{Z_8:Z_16}
\begin{center}
\resizebox{0.5\columnwidth}{!}{\begin{tabular}{||l| l| l| l||}
\hline 
\textbf{Method}  &\textbf{$\mathbb{Z}_2: \mathbb{Z}_{16}$} &\textbf{$\mathbb{Z}_8: \mathbb{Z}_{16}$} &\textbf{$\mathbb{Z}_{16}:\mathbb{Z}_{16}$} \\
\hline
$\mathbb{Z}_{k}$-invariant    &$1.26\pm0.25$	      &$14.30\pm1.04$	&$17.16 \pm 0.59$ \\
Proposed &$22.27\pm4.25$	  &$39.31\pm5.12$	&$40.11 \pm 4.17$ \\
Conv-1D &$21.67\pm0.69$	  &$50.15\pm1.03$	&$68.38\pm3.13$ \\
Simple FC       &$21.61\pm1.54$	      &$45.92\pm2.83$	&$51.64 \pm 0.89$\\
\hline
\end{tabular}}
\end{center}
\end{table}
 \subsubsection{Additional results}
The test errors for $\mathbb{Z}_{2} : \mathbb{Z}_{16}$, $\mathbb{Z}_{8} : \mathbb{Z}_{16}$ and $\mathbb{Z}_{16} : \mathbb{Z}_{16}$ invariant functions are provided in Table \ref{Z_8:Z_16}. The details of all the polynomials used in our work are presented in Table \ref{Poly Def}.
\end{document}